\title{Adaptive Stepsizing for Stochastic Gradient Langevin Dynamics in Bayesian Neural Networks}
\author{
  Rajit Rajpal \quad
  Benedict Leimkuhler \quad
  Yuanhao Jiang \\[6pt]
  School of Mathematics, University of Edinburgh \\[2pt]
  \texttt{s2592586@ed.ac.uk, B.Leimkuhler@ed.ac.uk, yuanhao.jiang@ed.ac.uk}
}
\newtheorem{theorem}{Theorem}
\newtheorem{lemma}{Lemma}
\date{}
\begin{document}
\maketitle

\begin{abstract}
\noindent Bayesian neural networks (BNNs) require scalable sampling algorithms to approximate posterior distributions over parameters.  Existing stochastic gradient Markov Chain Monte Carlo (SGMCMC) methods are highly sensitive to the choice of stepsize and adaptive variants such as pSGLD typically fail to sample the correct invariant measure without addition of a costly divergence correction term.  In this work, we build on the recently proposed `SamAdams' framework for timestep adaptation \cite{leimkuhler2025langevinsamplingalgorithminspired}, introducing an adaptive scheme: SA-SGLD, which employs time rescaling to modulate the stepsize according to a monitored quantity (typically the local gradient norm). SA-SGLD can automatically shrink stepsizes in regions of high curvature and expand them in flatter regions, improving both stability and mixing without introducing bias. We show that our method can achieve more accurate posterior sampling than SGLD on high-curvature 2D toy examples and in image classification with BNNs using sharp priors.
\end{abstract}


\section{Introduction}
\noindent Bayesian Neural Networks (BNNs) provide a framework for quantifying uncertainty in deep learning models by placing a posterior distribution over the weights, $p(\bm{\theta}|\mathcal{D})$. Algorithms like Stochastic Gradient Langevin Dynamics (SGLD) extend classical MCMC to the big-data setting by leveraging stochastic gradients. However, the loss landscape of deep neural networks is notoriously complex, characterized by pathological curvature and saddle points \cite{kim2020stochasticgradientlangevindynamics}. Several methods have introduced adaptive step sizes or preconditioning to improve the convergence of SGMCMC on challenging loss landscapes, including geometry-based schemes such as SGRLD and SGRHMC \cite{NIPS2013_309928d4,ma2015completerecipestochasticgradient}, and practical variants like pSGLD \cite{li2015preconditionedstochasticgradientlangevin}. However, as discussed in \cite{ma2015completerecipestochasticgradient,rensmeyer2024convergencelocallyadaptivescalable} and Section~\ref{sec:adaptive_sgmcmc}, these methods are biased unless the dynamics is augmented by a computationally expensive divergence term. Adaptive stepsizes can be viewed as an isotropic but dynamic preconditioning framework \cite{94965ae04e754347bcc2a39a10fc16dd}. Building on the recent formulation of \cite{leimkuhler2025langevinsamplingalgorithminspired}, we revisit adaptive step size methods for SGLD in Bayesian sampling by introducing SA-SGLD.  Importantly this scheme circumvents the computation of the divergence by use of statistical reweighting. We provide theoretical foundations and show using small examples and a Bayesian neural network that this method can improve performance compared to SGLD.

\section{Background}
\subsection{Stochastic Gradient MCMC (SGMCMC)}
SGMCMC is a family of sampling methods that seeks to perform efficient inference with noisy gradients as a result of using mini-batches of data. Suppose we have a dataset $\mathcal{D}=\{d_i\}_{i=1}^N$ and a model parameterized by $\bm{\theta}$. The likelihood is defined by $p(\mathcal{D}|\bm{\theta})$. In a Bayesian setting, we define the prior $p(\bm{\theta})$ and a posterior $p(\bm{\theta}|\mathcal{D}) \propto p(\mathcal{D}|\bm{\theta})p(\bm{\theta})$. This may be re-expressed as a Boltzmann-Gibbs distribution $p(\bm{\theta}|\mathcal{D}) \propto e^{-U(\bm{\theta})}$ where $U(\bm{\theta}) =  \log p(\mathcal{D}|\bm{\theta}) + \log p(\bm{\theta})$
is the potential energy which forms the foundation for SDE-derived sampling procedures. Typically, one may use standard MCMC methods to sample from $p(\bm{\theta}|\mathcal{D})$, provided that we have access to gradients of $\nabla_{\bm{\theta}}U(\bm{\theta})$. However, for a large dataset $\mathcal{D}$, this is costly. Therefore, we approximate $\nabla_{\bm{\theta}}U(\bm{\theta}) \approx \tilde{\nabla_{\bm{\theta}}U(\bm{\theta})}$ with noisy mini-batch gradients. The most commonly used SGMCMC methods is Stochastic Gradient Langevin Dynamics (SGLD) \cite{WellingTeh2011}. In continuous time, SGLD may be written as:
\begin{equation}
    \label{eq:continuous_sgld}
    d\bm{\theta} = -\nabla_{\bm{\theta}}\tilde{U}(\bm{\theta}) dt + \sqrt{2\beta^{-1}}d\bm{W}.
\end{equation}
$\beta^{-1}$ is the temperature to tune the stationary distribution $e^{-\beta U(\bm{\theta})}$. This SDE is discretized using Euler-Maruyama with decreasing stepsizes to remove the bias term asymptotically. However, fixed stepsizes are often used in practice.


\subsection{Adaptive SGMCMC}
\label{sec:adaptive_sgmcmc}
Adaptive SGMCMC methods improve convergence by adapting to the local geometry of the parameter space. A prominent example is \emph{Stochastic Gradient Riemannian Langevin Dynamics} (SGRLD) \cite{NIPS2013_309928d4,girolami2011Riemann}:
\begin{equation}
    d\bm{\theta} = \big[-\bm{G}(\bm{\theta})^{-1}\nabla_{\bm{\theta}}\tilde{U}(\bm{\theta}) + \bm{\Gamma}(\bm{\theta})\big]\,dt + \sqrt{2\beta^{-1}}\bm{G}(\bm{\theta})^{-\frac{1}{2}}\,d\bm{W},
\end{equation}
where $\bm{G}(\bm{\theta})$ is a positive-definite metric (e.g. Fisher information) and $\Gamma_i(\bm{\theta})=\sum_j \partial_j G_{ij}^{-1}(\bm{\theta})$ is the divergence correction term \cite{girolami2011Riemann,chung2013lectures}. $\bm{G}(\bm{\theta})$ defines a local metric on parameter space, shaping both the drift and diffusion to follow the local curvature of the posterior, improving exploration and convergence. The divergence must be added to the drift to ensure that the sampler preserves the correct stationary distribution. It compensates for the geometric distortion induced by the metric. Neglecting $\bm{\Gamma}$--as is done in practice for many adaptive algorithms such as pSGLD \cite{li2015preconditionedstochasticgradientlangevin}, AdamSGLD \cite{Kim22012022}, and SGRLD in alternative metrics \cite{yu2023scalable}--will break the invariance of the target distribution. These methods replace $\bm{G}(\bm{\theta})$ by an exponentially averaged estimate (e.g.\ RMSProp-like preconditioning) to avoid computing second-order derivatives, which effectively downscales $\bm{\Gamma}$ by $(1-\alpha)$. As shown by \citet{rensmeyer2024convergencelocallyadaptivescalable}, the resulting SDE no longer has the correct stationary density: in one dimension, omitting $\bm{\Gamma}$ yields $\pi(\bm{\theta}) \propto p(\bm{\theta}|\mathcal{D})\,G(\bm{\theta})^{-1},$ and in the general case this results in biased samples unless $G(\bm{\theta})$ is nearly constant over regions of high posterior mass. Thus, the divergence is essential for maintaining ergodicity and convergence to the true posterior \cite{ma2015completerecipestochasticgradient}. Other adaptive methods include ad-hoc stepsize schedules such as Cyclical SGMCMC \cite{zhang2020cyclicalstochasticgradientmcmc} which applies a periodic, cosine‐annealed stepsize schedule to alternate between exploration of new modes and local refinement without explicit curvature estimation. Adaptive Langevin \cite{xiaocheng2015} and SGNHT \cite{ding2014bayesian} introduce an adaptive friction term to adaptively damp the gradient noise. This has also been studied in the context of optimization \cite{fad_2023}\cite{karoni2026adaptivemomentumnonlineardamping}.

\subsection{SAM-ADAMS}
SamAdams \cite{leimkuhler2025langevinsamplingalgorithminspired} is a recently proposed adaptive method for kinetic Langevin dynamics, that adapts the stepsize based on the local geometry of the loss landscape while preserving stability of the dynamics. The key idea is to incorporate a state-dependent time-rescaling. This can be achieved by introducing a new physical time variable $\tau$ such that $\frac{dt}{d\tau} = \psi(\zeta_\tau)$ where $\psi(.)$ defines a Sundman time transformation. The choice of $\zeta$ can, in principle, be arbitrarily chosen. However, a smart choice would be an exponential average of some monitor function $g(\bm{\theta}_\tau)$ that captures local geometry. In \cite{leimkuhler2025langevinsamplingalgorithminspired}, $g(\bm{\theta}_\tau) = \|\nabla_{\theta} U(\bm{\theta})\|^2$ and $\psi(\zeta) = m\frac{\zeta^r + \frac{M}{m}}{\zeta^r + 1}$. This choice of $\psi(.)$ bounds the adaptive stepsize $m\Delta \tau \leq \Delta t \leq M\Delta \tau$. Overall, SamAdams may be written in continuous time as
\begin{align}
    \label{eq:samadams}
    d\bm{\theta} &= \bm{p} dt \\
    d\bm{p} &= (-\nabla U(\bm{\theta}) - \gamma \bm{p}) dt + \sqrt{2\beta^{-1}\gamma }d\bm{W}_t \\
    dt &= \psi(\zeta_{\tau}) d\tau \\
    d\zeta_{\tau} &= -\alpha \zeta_\tau d\tau + g(\bm{\theta}_\tau) d\tau
\end{align}
Unlike fixed or scheduled stepsizes, $\eta(\bm{\theta})$ automatically contracts in regions of high curvature and expands in flatter regions, thus improving both stability and mixing. The invariant measure $e^{-\beta (U(\bm{\theta}) + \frac{1}{2}\bm{p}^T\bm{p})}$ is preserved as time is only rescaled. It is important to note that $\zeta$ here is a scalar and does not adapt each degree of freedom separately.

\section{Methodology}
The SamAdams algorithm in \cite{leimkuhler2025langevinsamplingalgorithminspired} was introduced with exact gradients and in the kinetic Langevin setting. While that article mentions possible applications with noisy gradients the experiments in BNNs used a high accuracy approximate gradient.  The algorithm presented here (SA-SGLD) is specifically viewed as a noisy gradient scheme for large scale
training and is based on the following equations modifying SGLD:
\begin{align*}
    \label{eq:SA-SGLD}
    \zeta_{n+1} &= \rho \zeta_{n} + \frac{(1-\rho)}{\alpha}g(\bm{\theta}_n) \\
    \Delta t_{n+1} &= \psi(\zeta_{n+1}) \Delta \tau \\
    \bm{\theta}_{n+1} &= -\nabla \tilde{U}(\bm{\theta}_{n}) \Delta t_{n+1} + \sqrt{2 \beta^{-1} \Delta t_{n+1}} \varepsilon \\
\end{align*}
where $\rho = \exp(-\alpha \Delta \tau)$, $g(\bm{\theta}_\tau) = \|\nabla_{\theta} \tilde{U}(\bm{\theta})\|^2 + \delta$, $\varepsilon \sim \mathcal{N}(\bm{0}, \bm{I})$. $\delta$ is a smaller regularization term to lower bound $\zeta$ from 0. SA-SGLD may be viewed as a middle ground between two extremes. On the one hand, full Riemannian methods such as SGRLD \cite{NIPS2013_309928d4} incorporate a metric tensor $G(\bm{\theta})$ and a divergence correction term to preserve the stationary distribution, but the correction is prohibitively expensive in high dimensions. On the other hand, heuristic schemes like pSGLD \cite{li2015preconditionedstochasticgradientlangevin} adapt learning rates using optimization-inspired preconditioners (e.g.\ RMSProp), but neglect the correction term, which biases the stationary distribution. SamAdams achieves a compromise: it adapts stepsizes in a principled manner, avoiding the costly divergence computation, and retains the correct stationary distribution. This balance makes it both practical and robust for Bayesian deep learning. The additional computational cost is negligible as one only needs to compute the gradient norm and store a scalar.

\section{Theoretical Results}
The proofs for the following theorems are in Appendix \ref{sec:reweighting_app}
\begin{lemma}[Uniform moment bounds]
\label{lemma:sasgld_moment_bounds_corrected}
Assume the following:
\begin{enumerate}
    \item The monitor uses $g(\theta_n)=\|G_n\|^2 + \delta$, so that
    \begin{equation*}
        \zeta_{n+1} = \rho \zeta_n + \frac{1-\rho}{\alpha}g(\theta_n), 
        \quad \rho = e^{-\alpha h}.
    \end{equation*}
    \item The adaptive time step is $\Delta t_{n+1} = \psi(\zeta_{n+1}) h$,
    where $\psi$ is bounded and globally Lipschitz:
    \begin{align*}
        0 < m &\le \psi(\zeta) \le M < \infty, \\
        |\psi(x)-\psi(y)| &\le L_\psi |x-y|.
    \end{align*}
    \item The potential $U$ is $L$-smooth and dissipative:
    \begin{align*}
        \|\nabla U(\theta) - \nabla U(\theta')\| &\le L\|\theta - \theta'\|,\\
        \langle \theta, \nabla U(\theta) \rangle &\ge a\|\theta\|^2 - b,
    \end{align*}
    for $L > 0$, $a > 0$, $b \ge 0$.
    \item The stochastic gradients satisfy, for some $\sigma < \infty$:
    \begin{align*}
        \mathbb{E}[G_n | \mathcal{F}_n] &= \nabla U(\theta_n), \\
        \mathbb{E}[\|G_n-\nabla U(\theta_n)\|^2|\mathcal F_n]
        &\le \sigma^2(1+\|\theta_n\|^2),
    \end{align*}
    where $\mathcal{F}_n = \sigma(\theta_0,\zeta_0,\varepsilon_1,\ldots,
    \varepsilon_n,G_0,\ldots,G_{n-1})$.
\end{enumerate}

Define the constants:
\begin{align*}
    C_1 &:= 2L^2 + 2\sigma^2, \\
    C_2 &:= 2\|\nabla U(0)\|^2 + 2\sigma^2, \\
    C_3 &:= 2\beta^{-1}d.
\end{align*}
If $h>0$ is small enough that
\begin{equation}
\label{eq:gamma_condition_final}
\gamma(h) := 2 a m h - C_1 M^2 h^2 - 2\sigma M h > 0,
\end{equation}
then the iterates of
\begin{align}
\theta_{n+1}
&= \theta_n - \Delta t_{n+1} G_n 
   + \sqrt{2\beta^{-1}\Delta t_{n+1}}\,\varepsilon_{n+1},
   \label{eq:sasgld_update_final}\\
\varepsilon_{n+1}&\sim \mathcal{N}(0,I_d),
\end{align}
satisfy the uniform moment bound
\begin{equation*}
    \sup_{n\ge 0} \mathbb{E}\|\theta_n\|^{2} < \infty.
\end{equation*}
\end{lemma}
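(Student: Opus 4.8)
\textit{Proof strategy.} The plan is to establish a one-step Lyapunov (drift) inequality of the form $\mathbb{E}[\|\theta_{n+1}\|^2 \mid \mathcal{F}_n] \le (1-\gamma(h))\|\theta_n\|^2 + K$ with a finite constant $K$ independent of $n$, and then iterate it. First I would check integrability by induction: if $\mathbb{E}\|\theta_n\|^2<\infty$, then since $0<\Delta t_{n+1}\le Mh$ we get $\mathbb{E}\|G_n\|^2\le C_1\mathbb{E}\|\theta_n\|^2+C_2<\infty$ and $\mathbb{E}\|\varepsilon_{n+1}\|^2=d$, so \eqref{eq:sasgld_update_final} gives $\mathbb{E}\|\theta_{n+1}\|^2<\infty$; together with the (implicit) assumption $\mathbb{E}\|\theta_0\|^2<\infty$ this legitimises all conditional expectations below. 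If $\sigma=0$ the gradient noise and the term it produces vanish identically, so assume from now on $\sigma>0$.

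Expanding the update gives
\begin{equation*}
\|\theta_{n+1}\|^2 = \|\theta_n\|^2 - 2\Delta t_{n+1}\langle\theta_n, G_n\rangle + \Delta t_{n+1}^2\|G_n\|^2 + 2\sqrt{2\beta^{-1}\Delta t_{n+1}}\,\langle\theta_n-\Delta t_{n+1}G_n,\varepsilon_{n+1}\rangle + 2\beta^{-1}\Delta t_{n+1}\|\varepsilon_{n+1}\|^2 .
\end{equation*}
Writing $\mathcal{G}_n:=\sigma(\mathcal{F}_n,G_n)$, the noise $\varepsilon_{n+1}$ is independent of $\mathcal{G}_n$ while $\Delta t_{n+1}$ and $\theta_n-\Delta t_{n+1}G_n$ are $\mathcal{G}_n$-measurable, so conditioning on $\mathcal{G}_n$ annihilates the stochastic-integral term and replaces $\|\varepsilon_{n+1}\|^2$ by $d$; the tower property then leaves, after taking $\mathbb{E}[\cdot\mid\mathcal{F}_n]$, three contributions to bound.

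For the drift term I would split $G_n=\nabla U(\theta_n)+\xi_n$ with $\mathbb{E}[\xi_n\mid\mathcal{F}_n]=0$. Using $mh\le\Delta t_{n+1}\le Mh$ and the rearrangement $\langle\theta_n,\nabla U(\theta_n)\rangle+b\ge a\|\theta_n\|^2\ge0$ of dissipativity yields $-2\Delta t_{n+1}\langle\theta_n,\nabla U(\theta_n)\rangle\le -2amh\|\theta_n\|^2+2Mhb$. The hard part is the noise part $-2\Delta t_{n+1}\langle\theta_n,\xi_n\rangle$: because $\Delta t_{n+1}=\psi(\zeta_{n+1})h$ and $\zeta_{n+1}$ is built from $\|G_n\|^2$, the factor $\Delta t_{n+1}$ is \emph{not} $\mathcal{F}_n$-measurable, so unlike in plain SGLD this term does not integrate to zero. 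I would absorb it crudely: Young's inequality with weight $\sigma$ gives $2\Delta t_{n+1}\|\theta_n\|\|\xi_n\|\le\Delta t_{n+1}(\sigma\|\theta_n\|^2+\sigma^{-1}\|\xi_n\|^2)$, after which $\Delta t_{n+1}\le Mh$ and $\mathbb{E}[\|\xi_n\|^2\mid\mathcal{F}_n]\le\sigma^2(1+\|\theta_n\|^2)$ give $\mathbb{E}[-2\Delta t_{n+1}\langle\theta_n,\xi_n\rangle\mid\mathcal{F}_n]\le 2\sigma Mh\|\theta_n\|^2+\sigma Mh$. The quadratic term uses $\Delta t_{n+1}^2\|G_n\|^2\le M^2h^2\|G_n\|^2$ and $\mathbb{E}[\|G_n\|^2\mid\mathcal{F}_n]=\|\nabla U(\theta_n)\|^2+\mathbb{E}[\|\xi_n\|^2\mid\mathcal{F}_n]\le C_1\|\theta_n\|^2+C_2$, where $L$-smoothness enters via $\|\nabla U(\theta_n)\|^2\le 2L^2\|\theta_n\|^2+2\|\nabla U(0)\|^2$; the Gaussian term contributes at most $2\beta^{-1}Mhd=C_3Mh$.

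Summing the three bounds gives
\begin{equation*}
\mathbb{E}[\|\theta_{n+1}\|^2\mid\mathcal{F}_n]\le\bigl(1-2amh+2\sigma Mh+C_1M^2h^2\bigr)\|\theta_n\|^2+K,\qquad K:=2Mhb+\sigma Mh+C_2M^2h^2+C_3Mh,
\end{equation*}
and the coefficient of $\|\theta_n\|^2$ is exactly $1-\gamma(h)$. With $V_n:=\mathbb{E}\|\theta_n\|^2$, taking expectations gives $V_{n+1}\le(1-\gamma(h))V_n+K$. Under \eqref{eq:gamma_condition_final}, $\gamma(h)>0$: if $\gamma(h)\ge1$ then $V_{n+1}\le K$ for all $n$, and if $0<\gamma(h)<1$ a one-line induction gives $V_n\le\max\{V_0,\,K/\gamma(h)\}$; either way $\sup_{n\ge0}\mathbb{E}\|\theta_n\|^2<\infty$. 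The only genuine obstacle is the coupling of $\Delta t_{n+1}$ with $G_n$ in the noise term; everything else is the standard dissipative-drift recursion, and it is worth noting that the Lipschitz hypothesis on $\psi$ is not needed here --- only the two-sided bound $m\le\psi\le M$ is.
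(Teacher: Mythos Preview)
Your proof is correct and follows essentially the same route as the paper: expand $\|\theta_{n+1}\|^2$, handle the correlation between $\Delta t_{n+1}$ and the gradient noise via the crude bound $\Delta t_{n+1}\le Mh$ together with Young/Cauchy--Schwarz, and close a geometric drift recursion with exactly the same contraction factor $1-\gamma(h)$ and the same additive constant $K=\kappa(h)=C_2M^2h^2+(2b+C_3+\sigma)Mh$. Your explicit use of the intermediate $\sigma$-algebra $\mathcal{G}_n$ and the tower property, the integrability induction, and the observation that only the two-sided bound $m\le\psi\le M$ (not the Lipschitz property) is used are all clean touches that the paper leaves implicit.
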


\begin{theorem}[Ergodicity and $O(h)$ bias]
\label{thm:sasgld_bias_fixed_revised}
Retain the assumptions of Theorem~\ref{lemma:sasgld_moment_bounds_corrected}. Further assume:
\begin{enumerate}
    \item For $p > 0$ large, $\sup_{n\ge 0}\mathbb{E}\|\theta_n\|^{p} < \infty$
    \item $\mathbb{E}[\|G_n - \nabla U(\theta_n)\|^4 | \mathcal{F}_n] 
    \le \sigma_4^2(1 + \|\theta_n\|^4).$
    \item $(\theta_n, \zeta_n)$ is ergodic Markov with invariant measure $\widetilde\pi_h$.
    \item $U \in C^4(\mathbb{R}^d)$ with bounded derivatives, $\nabla U$ Lipschitz.
\end{enumerate}

\noindent Let $f:\mathbb{R}^d\to\mathbb{R}$ such that 
\begin{equation*}
\mathcal{L}\phi = f - \pi(f), 
\quad \mathcal{L} = -\nabla U\cdot\nabla + \beta^{-1}\Delta,
\end{equation*}
admits $\phi\in C^4(\mathbb{R}^d)$ with polynomial-growth derivatives:
\begin{equation*}
\sup_{\theta}\frac{\|D^j\phi(\theta)\|}{1+\|\theta\|^{q}} \le A_j, 
\quad j=0,1,2,3,4.
\end{equation*}

\noindent Define the weighted time-average:
\begin{equation*}
\mathcal{A}_n := 
\frac{\sum_{k=1}^n \Delta t_k f(\theta_k)}{\sum_{k=1}^n \Delta t_k}, 
\quad \Delta t_k=\psi(\zeta_k)h.
\end{equation*}
Let the $\psi$-weighted marginal be $\pi_h(f) := \frac{\widetilde\pi_h[\psi(\zeta)f(\theta)]}{\widetilde\pi_h[\psi(\zeta)]}$. Then,
For every $f$:
\begin{equation*}
\mathcal{A}_n \xrightarrow{n\to\infty}{\mathrm{a.s.}} \pi_h(f).
\end{equation*}

\noindent There exists $C>0$ such that
\begin{equation*}
|\pi_h(f)-\pi(f)| \le C h.
\end{equation*}
Moreover, for all $n\ge1$:
\begin{equation*}
\big|\mathbb{E}\mathcal{A}_n - \pi(f)\big|
\le |\pi_h(f)-\pi(f)| + \frac{C'}{n}
= O(h) + O(n^{-1}).
\end{equation*}
\end{theorem}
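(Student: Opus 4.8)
The plan is to establish the three assertions in turn, all leaning on the moment bounds of Lemma~\ref{lemma:sasgld_moment_bounds_corrected} (which, via assumption~1, transfer to finite moments of $\widetilde\pi_h$ of the required order by lower semicontinuity of moments along the convergent empirical measures). For the a.s.\ convergence I would first rewrite
\[
\mathcal{A}_n=\frac{\tfrac1n\sum_{k=1}^n\psi(\zeta_k)f(\theta_k)}{\tfrac1n\sum_{k=1}^n\psi(\zeta_k)},
\]
and apply the strong law of large numbers for the ergodic Markov chain $(\theta_n,\zeta_n)$ (assumption~3); this is legitimate since $\psi$ is bounded and $f$ has polynomial growth, so $\widetilde\pi_h[\psi|f|]<\infty$. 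The numerator converges a.s.\ to $\widetilde\pi_h[\psi(\zeta)f(\theta)]$ and the denominator to $\widetilde\pi_h[\psi(\zeta)]\ge m>0$, whence $\mathcal{A}_n\to\pi_h(f)$ a.s.

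For the $O(h)$ bias I would run the chain in stationarity, $(\theta_0,\zeta_0)\sim\widetilde\pi_h$, and use the Poisson equation $\mathcal{L}\phi=f-\pi(f)$. Taylor-expand $\phi(\theta_{n+1})-\phi(\theta_n)$ to fourth order along \eqref{eq:sasgld_update_final} and take $\mathbb{E}[\,\cdot\mid\mathcal{F}_n]$, conditioning first on $(\mathcal{F}_n,G_n)$ and then on $\mathcal{F}_n$. Two facts drive the computation: (i) $\varepsilon_{n+1}$ is independent of $(\mathcal{F}_n,G_n)$, so it produces the $\beta^{-1}\Delta\phi(\theta_n)$ term weighted by $\mathbb{E}[\Delta t_{n+1}\mid\mathcal{F}_n]$ and the Gaussian odd moments annihilate the drift--noise cross terms; (ii) since $\rho=e^{-\alpha h}=1-\alpha h+O(h^2)$ and $\psi$ is Lipschitz, $|\zeta_{n+1}-\zeta_n|\le h(\|G_n\|^2+\delta+\alpha\zeta_n)$, so replacing $\psi(\zeta_{n+1})$ by $\psi(\zeta_n)$ and using $\mathbb{E}[G_n\mid\mathcal{F}_n]=\nabla U(\theta_n)$ gives $\mathbb{E}[\Delta t_{n+1}G_n\mid\mathcal{F}_n]=h\psi(\zeta_n)\nabla U(\theta_n)+O(h^2)$ and $\mathbb{E}[\Delta t_{n+1}\mid\mathcal{F}_n]=h\psi(\zeta_n)+O(h^2)$. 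The remaining Taylor terms, the second-order stochastic-gradient contribution (bounded via assumption~2), and the Taylor remainder are all $O(h^2)$ in expectation with polynomially growing prefactors in $\|\theta_n\|,\zeta_n$ that are $\widetilde\pi_h$-integrable by the moment bounds and the polynomial-growth control on $D^j\phi$. This yields
\[
\mathbb{E}\big[\phi(\theta_{n+1})-\phi(\theta_n)\mid\mathcal{F}_n\big]=h\,\psi(\zeta_n)\,\mathcal{L}\phi(\theta_n)+h^2R_n=h\,\psi(\zeta_n)\big(f(\theta_n)-\pi(f)\big)+h^2R_n,
\]
with $\sup_n\mathbb{E}_{\widetilde\pi_h}|R_n|<\infty$. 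Taking $\mathbb{E}_{\widetilde\pi_h}$ and using stationarity (the left side is $0$) gives $\widetilde\pi_h[\psi(\zeta)(f(\theta)-\pi(f))]=O(h)$; dividing by $\widetilde\pi_h[\psi(\zeta)]\ge m$ produces $|\pi_h(f)-\pi(f)|\le Ch$. This is precisely the discrete analogue of the exact identity $\pi(f)=\widetilde\pi_{\mathrm{cont}}[\psi f]/\widetilde\pi_{\mathrm{cont}}[\psi]$ for the time-rescaled diffusion, so the $\psi$-weighting is exactly what removes the $O(1)$ error.

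For the finite-$n$ estimate I would write $|\mathbb{E}\mathcal{A}_n-\pi(f)|\le|\mathbb{E}\mathcal{A}_n-\pi_h(f)|+Ch$ and bound the first term by $C'/n$. Set $F(\theta,\zeta):=\psi(\zeta)(f(\theta)-\pi_h(f))$, which is $\widetilde\pi_h$-centred, and solve the discrete Poisson equation $(I-P_h)\Phi_h=F$ for the transition operator $P_h$ of $(\theta_n,\zeta_n)$; existence with polynomial-growth bounds, uniform in small $h$, follows from geometric ergodicity, which I would derive from the dissipative drift condition already used in Lemma~\ref{lemma:sasgld_moment_bounds_corrected} together with a minorization (using assumption~1 for the higher moments). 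Telescoping gives $\sum_{k=1}^nF(\theta_k,\zeta_k)=\Phi_h(\theta_1,\zeta_1)-\Phi_h(\theta_{n+1},\zeta_{n+1})+M_n$ with $M_n$ a martingale, so $\mathbb{E}\sum_kF=O(1)$ and $\mathbb{E}|\sum_kF|^2=O(n)$; the same argument applied to $\psi-\bar\psi$ gives $\mathbb{E}|\sum_k\psi(\zeta_k)-n\bar\psi|^2=O(n)$ with $\bar\psi:=\widetilde\pi_h[\psi]\in[m,M]$. Then I would split
\[
\frac{1}{\sum_{k=1}^n\psi(\zeta_k)}=\frac{1}{n\bar\psi}-\frac{\sum_{k=1}^n\psi(\zeta_k)-n\bar\psi}{n\bar\psi\sum_{k=1}^n\psi(\zeta_k)},
\]
multiply by $\sum_kF$ and take expectations: the first piece gives $\mathbb{E}\sum_kF/(n\bar\psi)=O(1/n)$, and the second, bounded by Cauchy--Schwarz together with $\sum_k\psi(\zeta_k)\ge mn$, is $O(n)/(mn^2\bar\psi)=O(1/n)$. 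A non-stationary start contributes only a geometrically decaying term, absorbed into $O(1/n)$.

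I expect the main obstacle to be Step~3: upgrading the bare ergodicity assumption to the quantitative form needed for a well-behaved discrete Poisson solution $\Phi_h$ with $h$-uniform polynomial bounds, and extracting the $O(n^{-1})$ rate (rather than the easy $O(n^{-1/2})$) despite the \emph{random} normalization $\sum_k\psi(\zeta_k)$ in $\mathcal{A}_n$ --- the split above is what buys this, but it must be matched by $L^2$ control of both partial sums. A secondary but unavoidable nuisance, in Step~2, is the dependence of $\Delta t_{n+1}$ on $G_n$ through $\zeta_{n+1}=\rho\zeta_n+\tfrac{1-\rho}{\alpha}(\|G_n\|^2+\delta)$: one must check that every term created by replacing $\psi(\zeta_{n+1})$ with $\psi(\zeta_n)$ genuinely carries an extra factor $h$ and is $\widetilde\pi_h$-integrable, which is exactly why the high moment bound, the fourth-moment gradient bound, and polynomial control of all derivatives $D^j\phi$ for $j\le4$ are imposed.
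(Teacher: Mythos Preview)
Your Steps~1 and~2 match the paper's argument closely: ratio form plus Birkhoff for the a.s.\ limit, and a fourth-order Taylor expansion of the continuous Poisson solution $\phi$ combined with the Lipschitz bound $|\psi(\zeta_{n+1})-\psi(\zeta_n)|=O(h)$ to absorb the $\Delta t_{n+1}$--$G_n$ correlation into the $O(h^2)$ remainder. The genuine divergence is in Step~3. The paper does \emph{not} introduce a discrete Poisson equation $(I-P_h)\Phi_h=F$ or invoke geometric ergodicity; instead it reuses the same continuous $\phi$ from Step~2, summing the one-step identity $\mathbb{E}[\phi(\theta_{k+1})-\phi(\theta_k)\mid\mathcal{F}_k]=\Delta t_k(f(\theta_k)-\pi(f))+O(h^2)$ over $k$ to obtain a telescoping boundary term $\phi(\theta_0)-\phi(\theta_n)=O(1)$ plus an accumulated $O(nh^2)$ error, then divides by $T_n\ge mhn$. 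This is considerably lighter---no quantitative ergodicity, no $h$-uniform bounds on a discrete resolvent---but it is also sketchier about the random normalization $\sum_k\Delta t_k$ inside the expectation, which you handle explicitly via your $1/(n\bar\psi)$ split and Cauchy--Schwarz. Your route is correct and more careful on that point, at the cost of the extra machinery you flag as the main obstacle; the paper's route shows that obstacle can be sidestepped by recycling $\phi$, so you might consider replacing your discrete $\Phi_h$ by the telescoped continuous $\phi$ while keeping your denominator split to make the division rigorous.
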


\section{Experiments}

\subsection{Motivational Examples}
We illustrate the benefit of SA-SGLD by mimicking two common challenges in Neural Network loss landscapes: multi-modal adaptation and high curvature regions, with 2D representative examples below.

\subsubsection{M\"uller--Brown potential.}
The M\"uller--Brown potential \cite{muller1979LocationSaddlePointsa} is a classical benchmark for studying transitions between metastable wells. 
As shown in Figure~\ref{fig:MB}, the SAM-ADAMS-enhanced variant (SA-SGLD) adapts its step size according to local curvature. 
This dynamic adjustment reflects what we would expect in high-dimensional BNN posteriors—where different modes correspond to functionally distinct solutions separated by narrow energy barriers \cite{garipov2018loss}.

\subsubsection{Star potential.}
The second example, the ``star'' potential~\cite{leimkuhler2025langevinsamplingalgorithminspired}, exhibits strong anisotropy and narrow funnel-shaped regions which may be induced in a BNN posterior by incorporating sparse priors. As shown in Figure~\ref{fig:star}, SGLD oversamples the outer ridges and struggles to penetrate the high-curvature funnels, whereas SA-SGLD adapts its step size to enter these regions smoothly, producing a more balanced exploration of the posterior geometry.

\begin{figure}[H]
    \centering
    \includegraphics[width=0.5\linewidth]{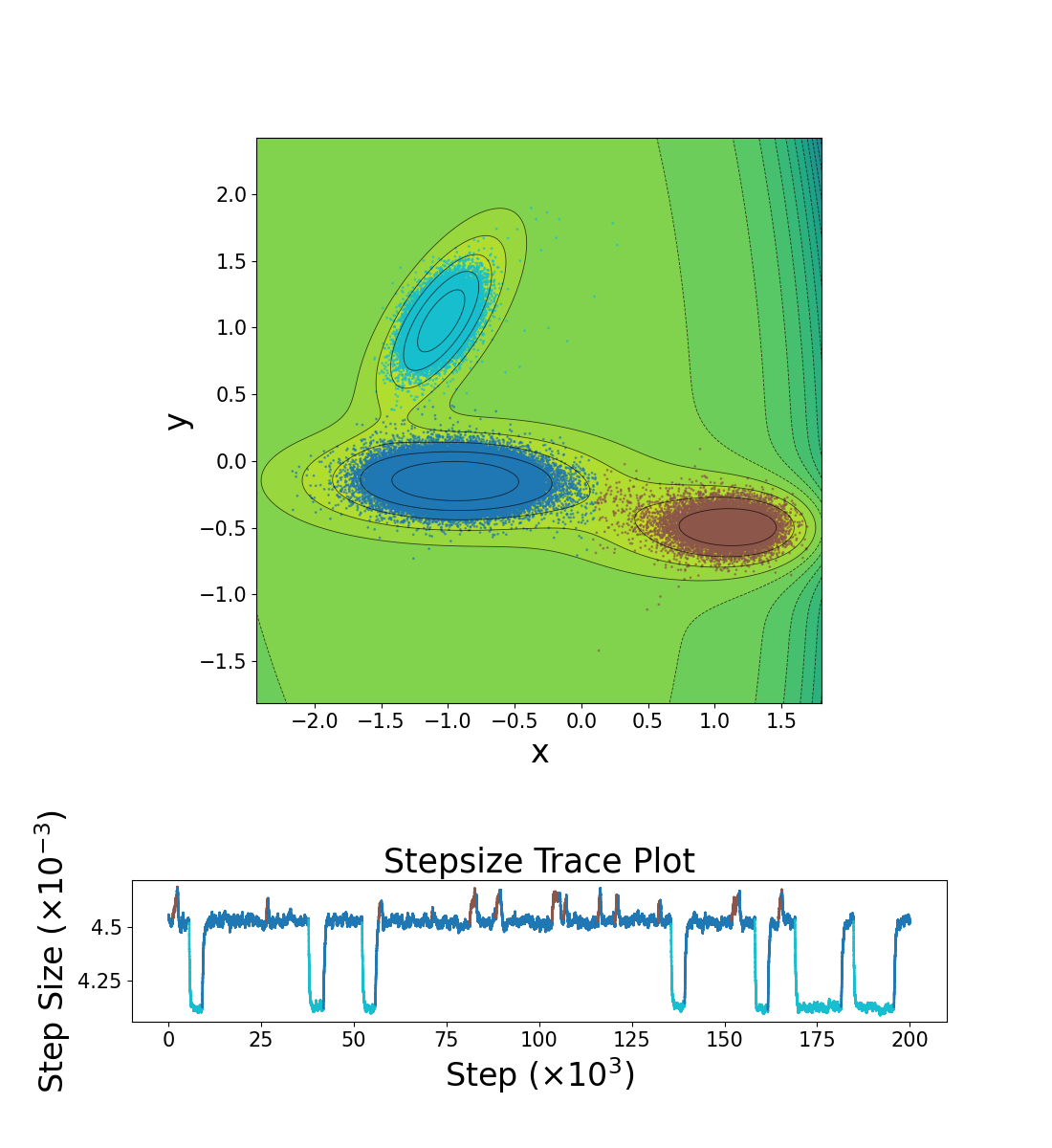}
    \caption{\textbf{M\"uller--Brown potential.}
    SA-SGLD adapts its step size to local curvature, enabling transitions across energy barriers—analogous to escaping local modes in complex BNN posteriors.}
    \label{fig:MB}
\end{figure}

\begin{figure}[H]
    \centering
    \includegraphics[width=0.5\linewidth]{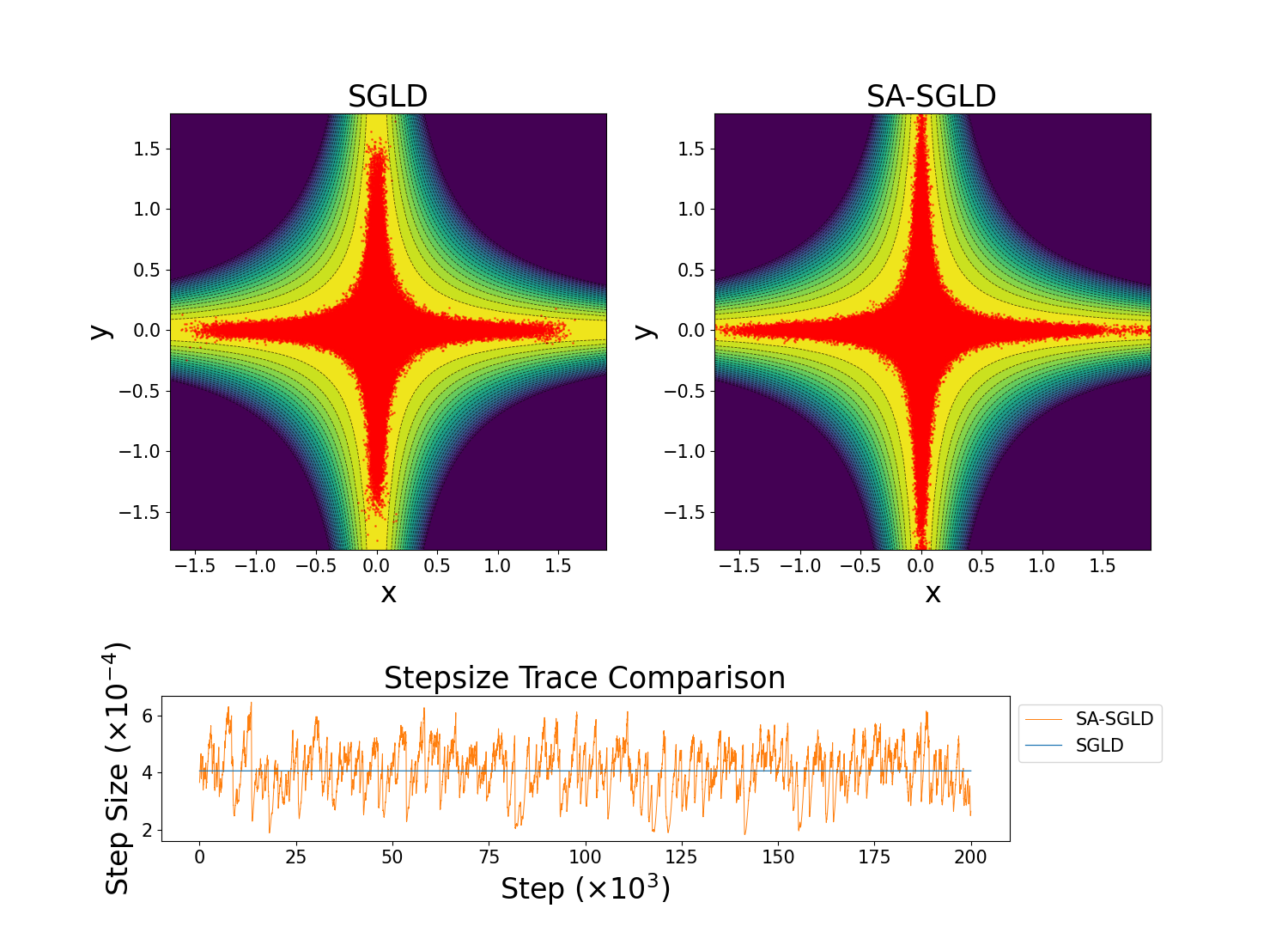}
    \caption{\textbf{Star potential.}
    SA-SGLD dynamically scales its step size, entering narrow high-curvature funnels that SGLD fails to explore.}
    \label{fig:star}
\end{figure}

\noindent Further implementation details, parameter values, and potential definitions are provided in Appendix A.

\subsection{Bayesian Neural Networks}
\cite{leimkuhler2025langevinsamplingalgorithminspired} demonstrated the effect of SAM-ADAMS on kinetic Langevin dynamics for neural network training at a very large batch size (10000/50000), effectively eliminating the role of stochastic gradient noise. Moreover, they simply assessed the dynamics of the collected samples rather than making evaluations using the collected ensemble. As a result, we conduct experiments in a proper BNN setup with appropriate metrics (e.g ECE/NLL/Acc) after ensembling, Bayesian prior setups, and a small batch size. We use the MNIST dataset a 3 layer fully connected neural network (FCNN) at a batch size of 100. We aim to estimate the posterior predictive $p(\bm{y}|\bm{x}) = \int p(\bm{y}|\bm{x}, \bm{\theta})p(\bm{\theta}|\mathcal{D}) d\bm{\theta}$ with a Monte Carlo estimate $\frac{1}{n}\Sigma_{i=1}^{n} p(\bm{y}|\bm{x},\bm{\theta})$. According to \cite{fortuin2022bayesian}, for FCNNs, uncorrelated heavy-tailed priors (e.g Student-t, Horseshoe) perform well. Such heavy-tailed priors induce sparsity in the parameter-space of the neural network and create funnel-like geometries in the posterior that SA-SGLD may be well-suited to adapting to. We use Cross Entropy Loss, and compare between a Horseshoe and Gaussian prior on the parameters. We run 200 epochs and collect samples with a thinning interval of 100 batches. This results in a total of 1000 models. 

\noindent We evaluate performance with: negative log-likelihood (NLL), expected calibration error (ECE), and test accuracy. We selected standard values for $r=0.25, s=2$ and picked $m=0.5, M=2, \alpha=1000, d\tau=0.2$ for Table \ref{tab:sgld_vs_sa} and Fig \ref{fig:exp_1_fig}. The first 100 epochs of the samples are discarded as burnin.

\begin{table}[H]
\centering
\small
\setlength{\tabcolsep}{3pt}
\begin{tabular}{@{}lccc@{}}
\hline
\textbf{Method} & \textbf{NLL $\downarrow$} & \textbf{Acc (\%) $\uparrow$} & \textbf{ECE (\%) $\downarrow$} \\
\hline
\multicolumn{4}{c}{\textbf{Gaussian Prior}} \\
\hline
SGLD & $\bm{0.192} \pm 0.005$ & $95.23 \pm 0.056$ & $\bm{5.72} \pm 0.27$ \\
SA-SGLD          & $0.193 \pm 0.004$ & $\bm{95.25} \pm 0.034$ & $5.76 \pm 0.22$ \\
\hline
\multicolumn{4}{c}{\textbf{Horseshoe Prior}} \\
\hline
SGLD & $0.086 \pm 0.004$ & $98.03 \pm 0.044$ & $3.64 \pm 0.11$ \\
SA-SGLD          & $\bm{0.080} \pm 0.003$ & $\bm{98.12} \pm 0.031$ & $\bm{3.49} \pm 0.9$ \\
\hline
\end{tabular}
\caption{Comparison of predictive performance between SGLD and SA-SGLD for Gaussian and Horseshoe priors.}
\label{tab:sgld_vs_sa}
\end{table}

\noindent We find that for Gaussian priors, SA-SGLD does not offer much benefit. However, for a Horseshoe prior, there is a substantial benefit as indicated by improved NLL, Test Accuracy, and ECE. We suspect this is due to the high curvature induced by the Horseshoe prior. Furthermore, we observe in Fig \ref{fig:exp_1_fig} that as more ensembles are collected past the burnin period, the log-probability of SA-SGLD exceeds that of SGLD.
\begin{figure}[H]
  \centering
  \begin{minipage}[b]{0.48\linewidth}
    \centering
    \includegraphics[width=\linewidth]{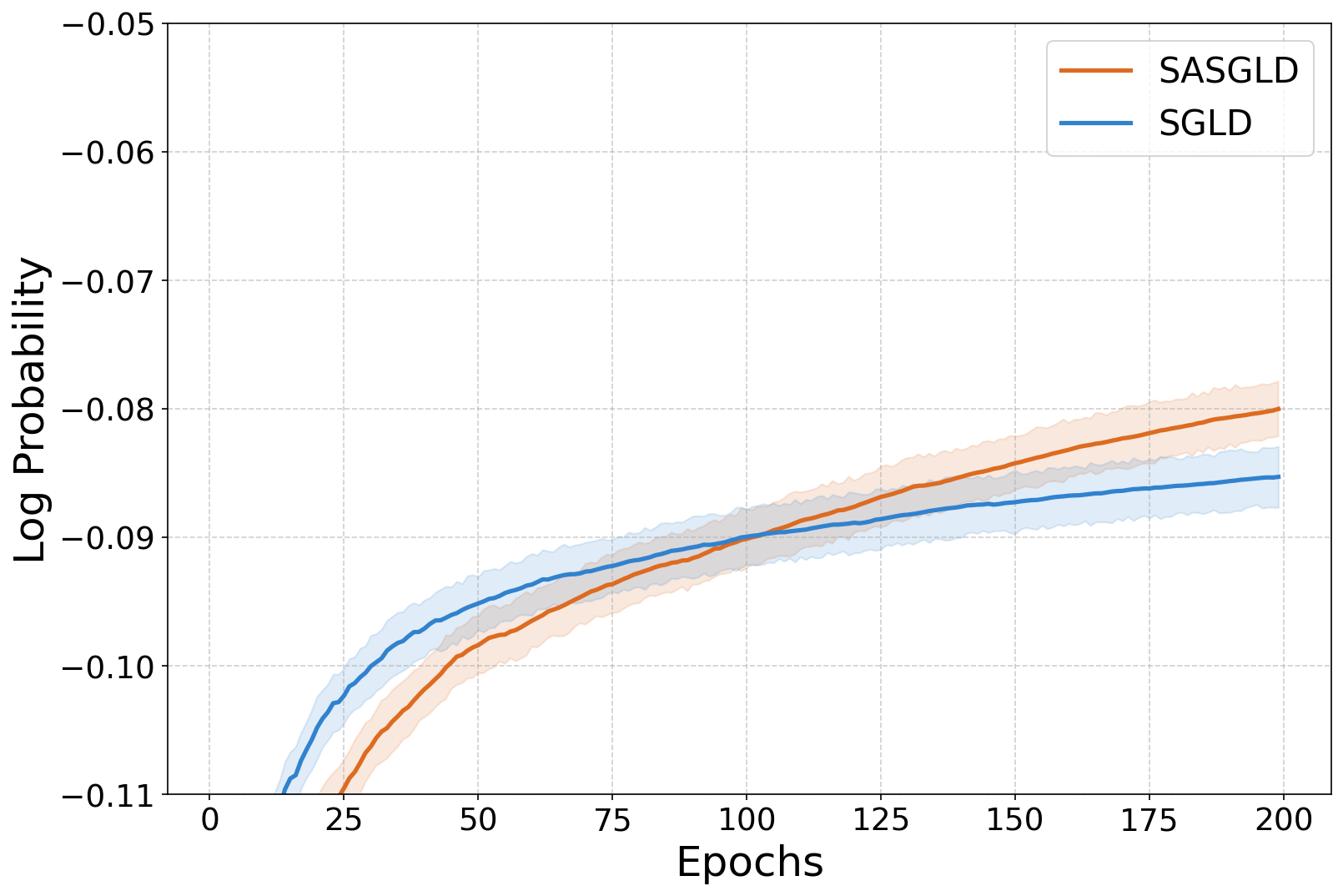}
    \smallskip\\
    \textbf{(a)} Log Probability Curves
  \end{minipage}
  \hfill
  \begin{minipage}[b]{0.48\linewidth}
    \centering
    \includegraphics[width=\linewidth]{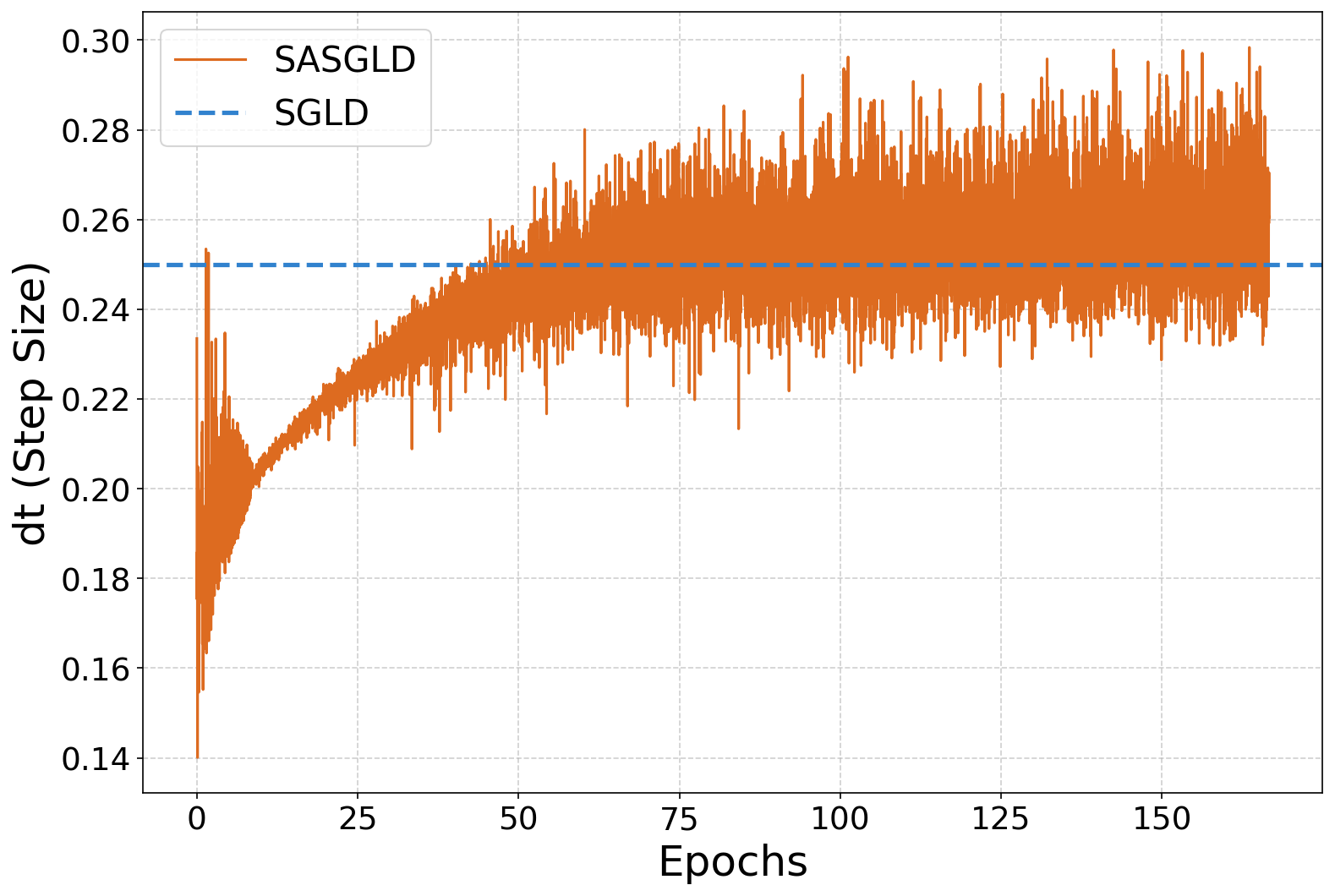}
    \smallskip\\
    \textbf{(b)} Stepsize Trace
  \end{minipage}
  \caption{SGLD vs SA-SGLD on sampling BNN with Horseshoe prior on MNIST data. Log Probability shown is computed with the entire ensemble until that epoch.}
  \label{fig:exp_1_fig}
\end{figure}

\noindent Fig \ref{fig:exp_2_fig} indicates that for $h \in [0.45, 0.55]$, SGLD's log probability diverges whereas SA-SGLD manages to retain performance even at a mean stepsize of 0.5. We only increased $d\tau$ to 0.35 and $r$ to 0.5. SGLD ($h=0.55$) diverged early and is not shown in the figure.

\begin{figure}[H]
  \centering
  \begin{minipage}[b]{0.48\linewidth}
    \centering
    \includegraphics[width=\linewidth]{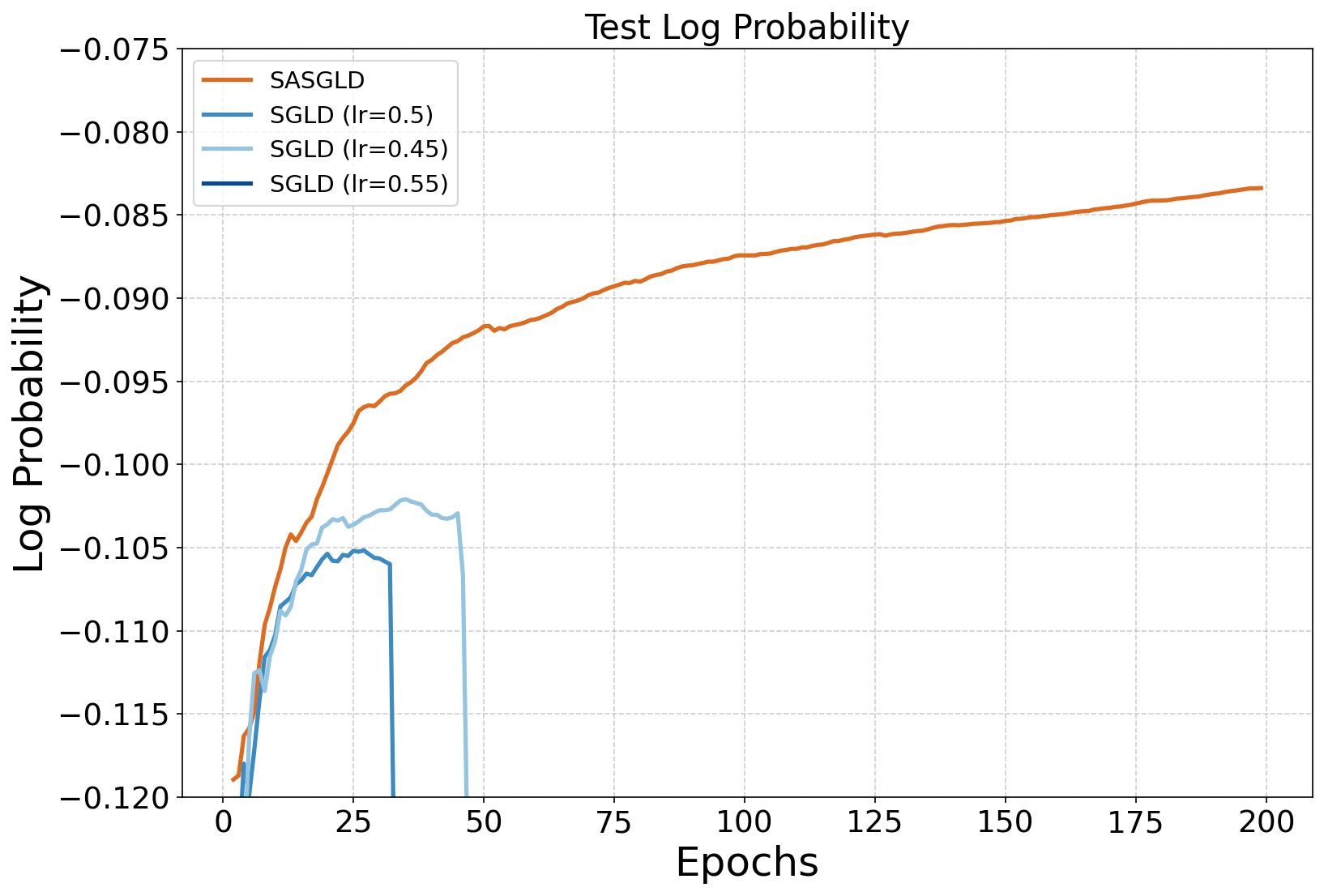}
    \smallskip\\
    \textbf{(a)} Log Probability Curves
  \end{minipage}
  \hfill
  \begin{minipage}[b]{0.48\linewidth}
    \centering
    \includegraphics[width=\linewidth]{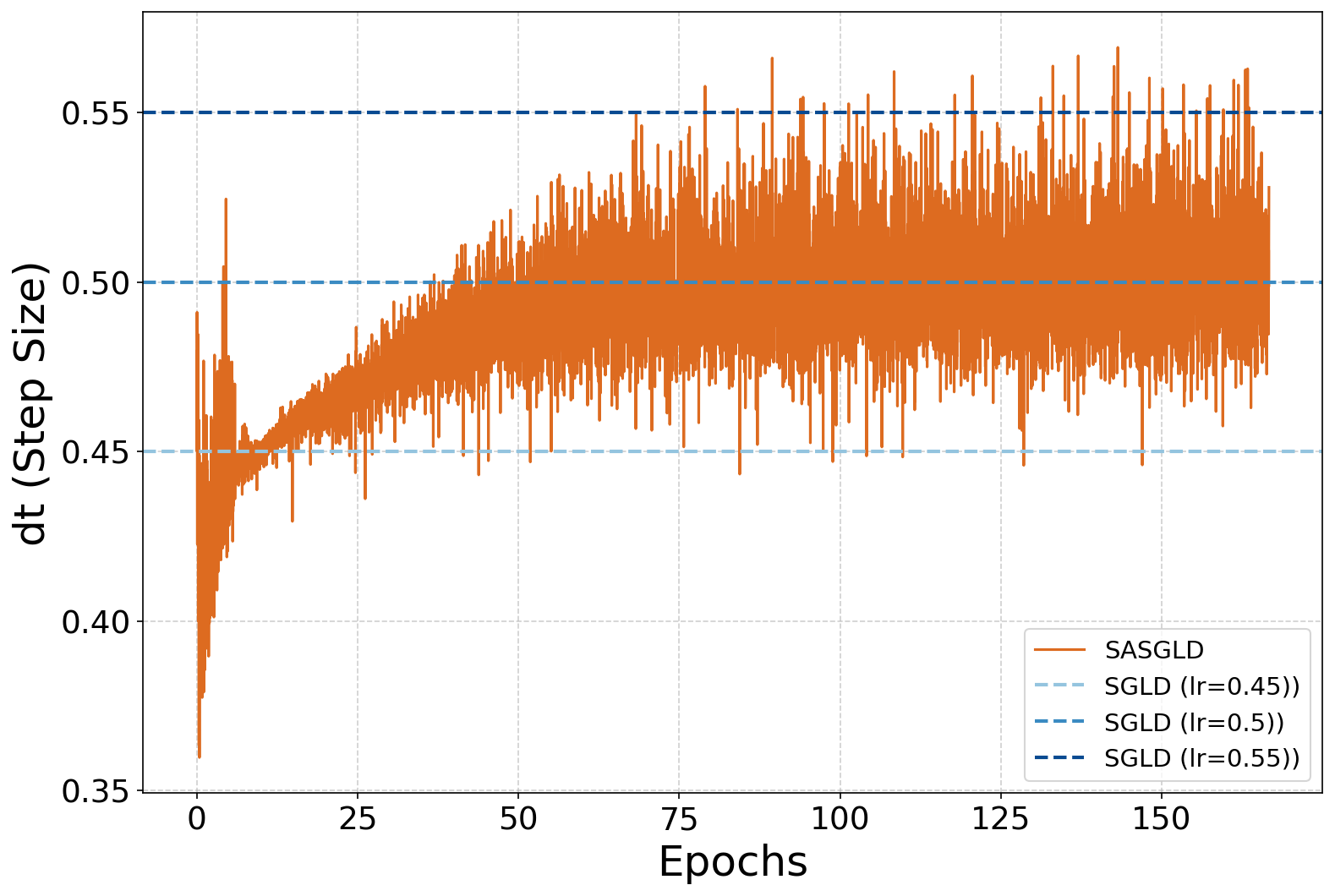}
    \smallskip\\
    \textbf{(b)} Stepsize Trace
  \end{minipage}
  \caption{SA-SGLD's robustness with large stepsizes.}
  \label{fig:exp_2_fig}
\end{figure}

\section{Conclusion}
We introduce a principled adaptive framework for stochastic gradient MCMC based on the SAM-ADAMS time-rescaling mechanism. By modulating stepsizes according to local gradient geometry, our approach preserves the correct invariant distribution while avoiding the expensive divergence corrections.

\section{Limitations and Future Work}
Extending SAM-ADAMS to a broader class of monitor functions or matrix-valued adaptations is a promising avenue, assuming complexity concerns can be addressed. Future work may include extending the method to other SGMCMC variants (e.g. SGHMC, SGNHT), exploring multidimensional adaptation, studying it as an optimizer in the zero-temperature limit, and applying it to wider range of probabilistic modeling tasks.

\bibliographystyle{plainnat}
\bibliography{aaai25}
\appendix
\section{Experimental Details}\label{app:toy-details}
\paragraph{M\"uller--Brown potential.}
The M\"uller--Brown potential~\cite{muller1979LocationSaddlePointsa} is defined as
\begin{align*}
    U(x,y) &= \frac{1}{20}\sum_{i=1}^{4} C_i \exp\{l_i\}, \\
    l_i &= a_i(x-u_i)^2 + b_i(x-u_i)(y-v_i) + c_i(y-v_i)^2,
\end{align*}
where \(C_i, a_i, b_i, c_i, u_i, v_i\), for \(i=1,\ldots,4\), are the potential parameters. 
We used
\begin{align*}
    C_i &= [-267.0, -285.0, -275.0, 2.5],\\
    a_i &= [-0.9,   -0.9,   -9.5,   0.6],\\
    b_i &= [0.0,    0.0,    10.0,  0.1],\\
    c_i &= [-9.0, -9.0, -5.5, 0.1],\\
    u_i &= [1.35,   -0.95,  -1.05, -1.0],\\
    v_i &= [-0.5,   -0.15,   1.05,   0.9].
\end{align*}
The adaptive parameters for SA-SGLD were \(\alpha=0.07\), \(r=0.25\), and \(s=2\).
\paragraph{Star potential.}
The ``star'' potential~\cite{leimkuhler2025langevinsamplingalgorithminspired} is defined as
\[
U(x,y) = x^2 + 1000x^2y^2 + y^2,
\]
with adaptive parameters \(\alpha=0.5\), \(r=0.5\), and \(s=2\).

\noindent Each simulation used $10^6$ samples with a fixed base step size for SGLD and a dynamically evolving step size for SA-SGLD. The corresponding trace plots visualize how the adaptive mechanism responds to local curvature across modes.

\paragraph{Bayesian Neural Network.}
All experiments were run on a NVIDIA V100 40GB GPU. Following the setup from \cite{yu2023scalable} and \cite{li2015preconditionedstochasticgradientlangevin}, we use the MNIST dataset a 3 layer fully connected neural network (FCNN) 784-N-N-10, where $N=1200$.  We do not use data augmentation to avoid the cold posterior effect to sample at temperature $T=1$.  All results reported as mean $\pm$ 95\% confidence interval over 5 runs. We note that the ECE may be ambiguous to interpret as it is not a proper scoring rule as a random classifier could have an ECE of 0. NLL on the other hand is a proper scoring rule.

\section{Theoretical Results}
\label{sec:reweighting_app}
Our analysis builds upon foundational work in stochastic approximation and Markov chain theory. The moment bounds in Theorem 1 employ Lyapunov drift techniques inspired by \cite{Raginsky2017}'s non-asymptotic analysis of non-convex learning, while the ergodicity framework follows the Markov chain stability theory of \cite{Meyn2009}. For the bias analysis in Theorem 2, we adapt the Poisson equation approach developed by \cite{Mattingly2010} for numerical time-averaging, combined with the non-asymptotic analysis of stochastic gradient Langevin dynamics by \cite{Vollmer2016}. The local error analysis draws from \cite{Sato2014}'s Fokker-Planck approximation framework, and the handling of adaptive step-sizes utilizes the stochastic approximation theory of \cite{Kushner2003}. 

\noindent We note that our proof does not yield optimal convergence rates and that inequalities can be tightened further. We used simplifying assumptions for theoretical convenience such as $(\theta_n, \zeta_n)$ being ergodic Markov with unique invariant measure. This is reasonable to assume because $\zeta > 0$ and $m \leq \psi(.) \leq M$, however it must still be shown rigorously. We also assumed $\sup_{n\ge 0} \mathbb{E}\|\theta_n\|^{p} < \infty$ for arbitrary $p$ while only proving it for $p=2$ in Theorem 1. This must also be proven rigorously and we leave it to future work. The regularization parameter $\delta > 0$ ensures $\zeta_n$ is bounded away from zero, allowing $\psi$ to be globally Lipschitz on that domain. 
All convergence constants depend on $\delta$ but remain finite for any fixed $\delta > 0$.
\setcounter{lemma}{0} 
\begin{lemma}[Uniform moment bounds]
\label{lemma:sasgld_moment_bounds_corrected}
Assume the following:
\begin{enumerate}
    \item The Sundman step is fixed: $\Delta\tau_n \equiv h>0$.
    \item The monitor uses $g(\theta_n)=\|G_n\|^2 + \delta$, so that
    \begin{equation*}
        \zeta_{n+1} = \rho \zeta_n + \frac{1-\rho}{\alpha}\|G_n\|^2, 
        \quad \rho = e^{-\alpha h}.
    \end{equation*}
    \item The adaptive time step is $\Delta t_{n+1} = \psi(\zeta_{n+1}) h$,
    where $\psi$ is bounded and globally Lipschitz:
    \begin{align*}
        0 < m &\le \psi(\zeta) \le M < \infty, \\
        |\psi(x)-\psi(y)| &\le L_\psi |x-y|.
    \end{align*} for $\zeta \geq \frac{(1-\rho)\delta}{\alpha}$.
    \item The potential $U$ is $L$-smooth and dissipative:
    \begin{align*}
        \|\nabla U(\theta) - \nabla U(\theta')\| &\le L\|\theta - \theta'\|,\\
        \langle \theta, \nabla U(\theta) \rangle &\ge a\|\theta\|^2 - b,
    \end{align*}
    for $L > 0$, $a > 0$, $b \ge 0$.
    \item The stochastic gradients satisfy, for some $\sigma < \infty$:
    \begin{equation*}
        \mathbb{E}[\|G_n-\nabla U(\theta_n)\|^2|\mathcal F_n]
        \le \sigma^2(1+\|\theta_n\|^2),
    \end{equation*}
    where $\mathcal{F}_n = \sigma(\theta_0,\zeta_0,\varepsilon_1,\ldots,
    \varepsilon_n,G_0,\ldots,G_{n-1})$.
\end{enumerate}

Define the constants:
\begin{align*}
    C_1 &:= 2L^2 + 2\sigma^2, \\
    C_2 &:= 2\|\nabla U(0)\|^2 + 2\sigma^2, \\
    C_3 &:= 2\beta^{-1}d.
\end{align*}
If $h>0$ is small enough that
\begin{equation}
\label{eq:gamma_condition_final}
\gamma(h) := 2 a m h - C_1 M^2 h^2 - 2\sigma M h > 0,
\end{equation}
then the iterates of
\begin{align}
\theta_{n+1}
&= \theta_n - \Delta t_{n+1} G_n 
   + \sqrt{2\beta^{-1}\Delta t_{n+1}}\,\varepsilon_{n+1},
   \label{eq:sasgld_update_final}\\
\varepsilon_{n+1}&\sim \mathcal{N}(0,I_d),
\end{align}
satisfy the uniform moment bound
\begin{equation*}
    \sup_{n\ge 0} \mathbb{E}\|\theta_n\|^{2} < \infty.
\end{equation*}
\end{lemma}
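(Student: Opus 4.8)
The plan is to derive a geometric Lyapunov drift inequality for $V(\theta)=\|\theta\|^{2}$ and then iterate it in expectation. The one genuine subtlety is that $\Delta t_{n+1}=\psi(\zeta_{n+1})h$ is \emph{not} $\mathcal{F}_n$-measurable: through $\zeta_{n+1}$ it depends on the stochastic gradient $G_n$, hence it is correlated with the drift $G_n$ in \eqref{eq:sasgld_update_final}, whereas the Gaussian increment $\varepsilon_{n+1}$ remains independent of $\sigma(\mathcal{F}_n,G_n)$. I would deal with this by a two-stage conditioning. First, condition on $\mathcal{G}_n:=\sigma(\mathcal{F}_n,G_n)$: with respect to this $\sigma$-algebra $\Delta t_{n+1}$ is measurable and $\varepsilon_{n+1}\sim\mathcal{N}(0,I_d)$ is independent and centred, so expanding $\|\theta_{n+1}\|^{2}$ and taking $\mathbb{E}[\cdot\mid\mathcal{G}_n]$ annihilates the terms linear in $\varepsilon_{n+1}$ and leaves
\begin{equation*}
\mathbb{E}[\|\theta_{n+1}\|^{2}\mid\mathcal{G}_n]
=\|\theta_n\|^{2}-2\Delta t_{n+1}\langle\theta_n,G_n\rangle+\Delta t_{n+1}^{2}\|G_n\|^{2}+2\beta^{-1}d\,\Delta t_{n+1}.
\end{equation*}

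Next, I would bound each remaining term using only the deterministic two-sided bound $mh\le\Delta t_{n+1}\le Mh$ --- which is precisely what lets one bypass the $\Delta t_{n+1}$--$G_n$ correlation --- and then take $\mathbb{E}[\cdot\mid\mathcal{F}_n]$. Writing $\langle\theta_n,G_n\rangle=\langle\theta_n,\nabla U(\theta_n)\rangle+\langle\theta_n,G_n-\nabla U(\theta_n)\rangle$: dissipativity gives the pathwise estimate $-2\Delta t_{n+1}\langle\theta_n,\nabla U(\theta_n)\rangle\le-2amh\|\theta_n\|^{2}+2bMh$ (using $\Delta t_{n+1}\ge mh$ on the coercive part and $\Delta t_{n+1}\le Mh$ on the $b$-part); Cauchy--Schwarz together with $\Delta t_{n+1}\le Mh$, conditional Jensen and the variance bound give $\mathbb{E}[-2\Delta t_{n+1}\langle\theta_n,G_n-\nabla U(\theta_n)\rangle\mid\mathcal{F}_n]\le 2\sigma Mh\,\|\theta_n\|\sqrt{1+\|\theta_n\|^{2}}\le 2\sigma Mh\|\theta_n\|^{2}+\sigma Mh$; $L$-smoothness together with conditional unbiasedness and the variance bound give $\mathbb{E}[\|G_n\|^{2}\mid\mathcal{F}_n]\le C_1\|\theta_n\|^{2}+C_2$, hence $\mathbb{E}[\Delta t_{n+1}^{2}\|G_n\|^{2}\mid\mathcal{F}_n]\le M^{2}h^{2}(C_1\|\theta_n\|^{2}+C_2)$; and $2\beta^{-1}d\,\Delta t_{n+1}\le C_3Mh$. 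Beforehand I would record that $\zeta_{n+1}\ge(1-\rho)\delta/\alpha$ for every $n\ge0$ (immediate from the recursion for $\zeta$), so the iterates stay in the region where $m\le\psi\le M$ holds; note that only boundedness of $\psi$, not its Lipschitz property, is used in this lemma.

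Collecting these bounds yields the one-step recursion $\mathbb{E}\|\theta_{n+1}\|^{2}\le(1-\gamma(h))\,\mathbb{E}\|\theta_n\|^{2}+K(h)$ with $\gamma(h)=2amh-C_1M^{2}h^{2}-2\sigma Mh$ exactly as in \eqref{eq:gamma_condition_final} and $K(h):=2bMh+C_2M^{2}h^{2}+\sigma Mh+C_3Mh<\infty$. Under the hypothesis $\gamma(h)>0$ (shrinking $h$ further if needed so that $1-\gamma(h)\in[0,1)$, which is automatic as $h\to0$), the factor $1-\gamma(h)$ lies in $[0,1)$ and, since $\|\theta_n\|^{2}\ge0$, iterating and summing the geometric series gives $\mathbb{E}\|\theta_n\|^{2}\le(1-\gamma(h))^{n}\mathbb{E}\|\theta_0\|^{2}+K(h)/\gamma(h)\le\mathbb{E}\|\theta_0\|^{2}+K(h)/\gamma(h)$ for all $n$, so $\sup_{n\ge0}\mathbb{E}\|\theta_n\|^{2}<\infty$ provided $\mathbb{E}\|\theta_0\|^{2}<\infty$. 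The main obstacle is exactly the bookkeeping around the dependence of $\Delta t_{n+1}$ on $G_n$; once it is channelled through the two-stage conditioning and the uniform step bounds, the rest is a standard dissipative-drift calculation and the scalar recursion closes immediately.
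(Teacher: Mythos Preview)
Your proposal is correct and follows essentially the same Lyapunov-drift argument as the paper: expand $\|\theta_{n+1}\|^{2}$, use the deterministic bounds $mh\le\Delta t_{n+1}\le Mh$ to sidestep the $\Delta t_{n+1}$--$G_n$ correlation, apply dissipativity plus Cauchy--Schwarz for the drift term and $L$-smoothness plus the variance bound for $\|G_n\|^{2}$, and close the same one-step recursion with contraction factor $1-\gamma(h)$ and additive constant $K(h)$ equal to the paper's $\kappa(h)$. Your explicit two-stage conditioning on $\mathcal{G}_n=\sigma(\mathcal{F}_n,G_n)$ is a slightly tidier bookkeeping device than the paper's direct $\mathcal{F}_n$-conditioning, and your side remarks (that only boundedness of $\psi$ is needed here and that $\zeta_{n+1}\ge(1-\rho)\delta/\alpha$) are correct, but the substance is identical.
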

\begin{proof}
\noindent Since $\Delta t_{n+1}=\psi(\zeta_{n+1})h$ depends on $G_n$, it is not $\mathcal F_n$--measurable. 
All conditional expectations keep $\Delta t_{n+1}$ inside the expectation.

\noindent Expanding \eqref{eq:sasgld_update_final}:
\begin{align*}
\|\theta_{n+1}\|^2
&= \|\theta_n\|^2 
   - 2\Delta t_{n+1}\langle \theta_n, G_n\rangle \\
   &\quad + \Delta t_{n+1}^2\|G_n\|^2
   + 2\beta^{-1}\Delta t_{n+1}\|\varepsilon_{n+1}\|^2 + R_n,
\end{align*}
where $R_n$ contains cross terms. Taking conditional expectations using 
$\mathbb{E}[\|\varepsilon_{n+1}\|^2|\mathcal F_n]=d$ and $\mathbb{E}[\varepsilon_{n+1}|\mathcal F_n]=0$:
\begin{equation}
\label{eq:cond_expand_final}
\begin{split}
\mathbb{E}[\|\theta_{n+1}\|^2|\mathcal F_n]
&= \|\theta_n\|^2
-2\,\mathbb{E}[\Delta t_{n+1}\langle\theta_n,G_n\rangle|\mathcal F_n] \\
&\quad +\mathbb{E}[\Delta t_{n+1}^2\|G_n\|^2|\mathcal F_n]
+2\beta^{-1}d\,\mathbb{E}[\Delta t_{n+1}|\mathcal F_n].
\end{split}
\end{equation}

\noindent Decompose:
\begin{align*}
&\mathbb{E}[\Delta t_{n+1}\langle\theta_n,G_n\rangle|\mathcal F_n] \\
&= \mathbb{E}[\Delta t_{n+1}\langle\theta_n,\nabla U(\theta_n)\rangle|\mathcal F_n] \\
&\quad + \mathbb{E}[\Delta t_{n+1}\langle\theta_n,G_n-\nabla U(\theta_n)\rangle|\mathcal F_n].
\end{align*}
By dissipativity and $m h \le \Delta t_{n+1}\le M h$:
\begin{equation*}
-2\,\mathbb{E}[\Delta t_{n+1}\langle\theta_n,\nabla U(\theta_n)\rangle|\mathcal F_n]
\le -2a m h\,\|\theta_n\|^2 + 2b M h.
\end{equation*}
For the stochastic term, Cauchy--Schwarz gives:
\begin{align*}
&\big|\mathbb{E}[\Delta t_{n+1}\langle\theta_n,G_n-\nabla U(\theta_n)\rangle|\mathcal F_n]\big| \\
&\le M h\,\|\theta_n\|\,\mathbb{E}[\|G_n-\nabla U(\theta_n)\||\mathcal F_n] \\
&\le M h\,\|\theta_n\|\sqrt{\mathbb{E}[\|G_n-\nabla U(\theta_n)\|^2|\mathcal F_n]} \\
&\le M h\sigma\,\|\theta_n\|\sqrt{1+\|\theta_n\|^2}.
\end{align*}
Using Young's inequality $xy \le \frac{x^2}{2} + \frac{y^2}{2}$ with $x = \|\theta_n\|$:
\begin{equation*}
\|\theta_n\|\sqrt{1+\|\theta_n\|^2} \le \|\theta_n\|^2 + \frac{1}{2}.
\end{equation*}
Thus:
\begin{equation}
\label{eq:drift_bound_final}
\begin{split}
&-2\,\mathbb{E}[\Delta t_{n+1}\langle\theta_n,G_n\rangle|\mathcal F_n] \\
&\le -(2a m h - 2\sigma M h)\|\theta_n\|^2 + (2b M h + \sigma M h).
\end{split}
\end{equation}

\noindent From $L$-smoothness and the noise assumption:
\begin{align*}
\mathbb{E}[\|G_n\|^2|\mathcal F_n] 
&\le 2\mathbb{E}[\|\nabla U(\theta_n)\|^2|\mathcal F_n] \\
&\quad + 2\mathbb{E}[\|G_n-\nabla U(\theta_n)\|^2|\mathcal F_n] \\
&\le 2L^2\|\theta_n\|^2 + 2\|\nabla U(0)\|^2 + 2\sigma^2(1+\|\theta_n\|^2) \\
&= C_1\|\theta_n\|^2 + C_2.
\end{align*}
Since $\Delta t_{n+1}^2 \le M^2 h^2$:
\begin{align*}
\mathbb{E}[\Delta t_{n+1}^2\|G_n\|^2|\mathcal F_n]
&\le M^2 h^2(C_1\|\theta_n\|^2 + C_2), \\
2\beta^{-1}d\,\mathbb{E}[\Delta t_{n+1}|\mathcal F_n]
&\le C_3 M h.
\end{align*}

Substituting into \eqref{eq:cond_expand_final}:
\begin{align*}
\mathbb{E}[\|\theta_{n+1}\|^2|\mathcal F_n]
&\le \|\theta_n\|^2 - (2a m h - 2\sigma M h)\|\theta_n\|^2 + C_3 M h \\
&\quad + (2b M h + \sigma M h) 
   + M^2 h^2(C_1\|\theta_n\|^2 + C_2) \\
&= (1 - 2a m h + C_1 M^2 h^2 + 2\sigma M h)\|\theta_n\|^2 \\
&\quad + (C_2 M^2 h^2 + (2b + C_3 + \sigma)M h).
\end{align*}
Let $\kappa(h) := C_2 M^2 h^2 + (2b + C_3 + \sigma)M h$.
If $h>0$ is small enough that $\gamma(h)>0$, taking expectations:
\begin{equation*}
x_{n+1} \le (1-\gamma(h))x_n + \kappa(h), \quad x_n:=\mathbb{E}\|\theta_n\|^2.
\end{equation*}
Since $0<1-\gamma(h)<1$:
\begin{equation*}
\sup_{n\ge0}x_n 
\le \max\{x_0, \kappa(h)/\gamma(h)\} < \infty.
\end{equation*}
\end{proof}

\setcounter{theorem}{0} 
\begin{theorem}[Ergodicity and $O(h)$ bias]
\label{thm:sasgld_bias_fixed_revised}
Retain Lemma~\ref{lemma:sasgld_moment_bounds_corrected} assumptions. Further assume:
\begin{enumerate}
    \item $\mathbb{E}[G_n | \mathcal{F}_n] = \nabla U(\theta_n)$.
    \item For $p > 0$ large: $\sup_{n\ge 0} \mathbb{E}\|\theta_n\|^{p} < \infty,$
    \item $\mathbb{E}[\|G_n - \nabla U(\theta_n)\|^4 | \mathcal{F}_n] 
    \le \sigma_4^2(1 + \|\theta_n\|^4).$
    \item $(\theta_n, \zeta_n)$ is ergodic Markov with invariant measure $\widetilde\pi_h$.
    \item $U \in C^4(\mathbb{R}^d)$ with bounded derivatives, $\nabla U$ Lipschitz.
\end{enumerate}

Let $f:\mathbb{R}^d\to\mathbb{R}$ such that 
\begin{equation*}
\mathcal{L}\phi = f - \pi(f), 
\quad \mathcal{L} = -\nabla U\cdot\nabla + \beta^{-1}\Delta,
\end{equation*}
admits $\phi\in C^4(\mathbb{R}^d)$ with polynomial-growth derivatives:
\begin{equation*}
\sup_{\theta}\frac{\|D^j\phi(\theta)\|}{1+\|\theta\|^{q}} \le A_j, 
\quad j=0,1,2,3,4.
\end{equation*}

Define the weighted time-average:
\begin{equation*}
\mathcal{A}_n := 
\frac{\sum_{k=1}^n \Delta t_k f(\theta_k)}{\sum_{k=1}^n \Delta t_k}, 
\quad \Delta t_k=\psi(\zeta_k)h.
\end{equation*}
Let $\pi_h(f) := \frac{\widetilde\pi_h[\psi(\zeta)f(\theta)]}{\widetilde\pi_h[\psi(\zeta)]}$ be the $\psi$-weighted marginal. Then,
For every $f$:
\begin{equation*}
\mathcal{A}_n \xrightarrow{n\to\infty}{\mathrm{a.s.}} \pi_h(f).
\end{equation*}

\noindent There exists $C>0$ such that
\begin{equation*}
|\pi_h(f)-\pi(f)| \le C h.
\end{equation*}
Moreover, for all $n\ge1$:
\begin{equation*}
\big|\mathbb{E}\mathcal{A}_n - \pi(f)\big|
\le |\pi_h(f)-\pi(f)| + \frac{C'}{n}
= O(h) + O(n^{-1}).
\end{equation*}
\end{theorem}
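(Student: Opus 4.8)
The plan is to prove the three assertions separately — the almost-sure limit of $\mathcal{A}_n$, the static gap $|\pi_h(f)-\pi(f)|\le Ch$, and the finite-$n$ estimate — with the last following from the first two by the triangle inequality. For the almost-sure limit, rewrite $\mathcal{A}_n = \big(n^{-1}\sum_{k=1}^n\psi(\zeta_k)f(\theta_k)\big)\big/\big(n^{-1}\sum_{k=1}^n\psi(\zeta_k)\big)$. Since $\mathcal{L}\phi=f-\pi(f)$ with $\nabla U$ Lipschitz and $\nabla\phi,D^2\phi$ of polynomial growth, $f$ itself has polynomial growth; combined with $0<m\le\psi\le M$ and the fact that $\widetilde\pi_h$ inherits finite polynomial moments from Lemma~\ref{lemma:sasgld_moment_bounds_corrected} and the higher-moment hypothesis (via Fatou), both $\psi(\zeta)f(\theta)$ and $\psi(\zeta)$ are $\widetilde\pi_h$-integrable. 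Birkhoff's ergodic theorem for the ergodic chain $(\theta_n,\zeta_n)$ then gives $n^{-1}\sum_k\psi(\zeta_k)f(\theta_k)\to\widetilde\pi_h[\psi(\zeta)f(\theta)]$ and $n^{-1}\sum_k\psi(\zeta_k)\to\widetilde\pi_h[\psi(\zeta)]\ge m>0$ a.s., and continuity of $(u,v)\mapsto u/v$ on $\{v>0\}$ yields $\mathcal{A}_n\to\pi_h(f)$ a.s.

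For the static gap I would use a reweighted stationarity identity. Taylor-expanding one SA-SGLD step of $\phi$, conditioning first on $\mathcal{G}_n:=\sigma(\mathcal{F}_n,G_n)$ so that $\Delta t_{n+1}=\psi(\zeta_{n+1})h$ is measurable, and then on $\mathcal{F}_n$ using $\mathbb{E}[G_n\mid\mathcal{F}_n]=\nabla U(\theta_n)$, $|\psi(x)-\psi(y)|\le L_\psi|x-y|$, $1-\rho\le\alpha h$, and the second- and fourth-moment bounds on $G_n-\nabla U(\theta_n)$, one obtains a one-step expansion
\[
\mathbb{E}[\phi(\theta_{n+1})\mid\mathcal{F}_n] = \phi(\theta_n) + h\,\psi(\zeta_n)\,(\mathcal{L}\phi)(\theta_n) + h^2\rho_n, \qquad |\rho_n|\le C\,(1+\|\theta_n\|+\zeta_n)^{q'}.
\]
The crucial bookkeeping is that replacing the gradient-correlated weight $\psi(\zeta_{n+1})$ by $\psi(\zeta_n)$ costs only $L_\psi|\zeta_{n+1}-\zeta_n|\le L_\psi\big(\alpha h\zeta_n + h(\|G_n\|^2+\delta)\big)$, i.e. $O(h)$ times a polynomial, which is absorbed into $h^2\rho_n$; likewise the quadratic-variation term $\tfrac12\Delta t_{n+1}^2\,G_n^T D^2\phi(\theta_n)G_n$ and the third- and fourth-order Taylor remainders are $O(h^2)$ (odd Gaussian moments annihilate the would-be $O(h^{3/2})$ contributions), each with a polynomially growing coefficient. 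Taking expectations under $(\theta_n,\zeta_n)\sim\widetilde\pi_h$ and using invariance ($\mathbb{E}_{\widetilde\pi_h}\phi(\theta_{n+1})=\widetilde\pi_h[\phi]$) makes the left side vanish, so dividing by $h$,
\[
\widetilde\pi_h[\psi(\zeta)\,(\mathcal{L}\phi)(\theta)] = -h\,\widetilde\pi_h[\rho].
\]
Since $\mathcal{L}\phi=f-\pi(f)$ exactly, this reads $\widetilde\pi_h[\psi(\zeta)f(\theta)] - \pi(f)\,\widetilde\pi_h[\psi(\zeta)] = -h\,\widetilde\pi_h[\rho]$; dividing by $\widetilde\pi_h[\psi(\zeta)]\ge m$ and bounding $|\widetilde\pi_h[\rho]|<\infty$ (finite $\widetilde\pi_h$-moments in $\theta$ and in $\zeta$, the latter because $\zeta_{n+1}=\rho\zeta_n+\tfrac{1-\rho}{\alpha}(\|G_n\|^2+\delta)$ is a stable recursion driven by $\|G_n\|^2\lesssim 1+\|\theta_n\|^2$) gives $|\pi_h(f)-\pi(f)|\le Ch$. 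This is precisely where statistical reweighting substitutes for the divergence correction.

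For the finite-$n$ estimate it suffices, by the triangle inequality, to show $|\mathbb{E}\mathcal{A}_n-\pi_h(f)|=O(1/n)$. Write $\mathcal{A}_n-\pi_h(f)=D_n/\bar\xi_n$, where $\bar\xi_n=n^{-1}\sum_k\psi(\zeta_k)\ge m$ and $D_n=n^{-1}\sum_{k=1}^n g_h(\theta_k,\zeta_k)$ with $g_h(\theta,\zeta):=\psi(\zeta)(f(\theta)-\pi_h(f))$, which satisfies $\widetilde\pi_h[g_h]=0$ by the definition of $\pi_h(f)$. Split $D_n/\bar\xi_n = D_n/\mu_\xi - D_n(\bar\xi_n-\mu_\xi)/(\mu_\xi\bar\xi_n)$ with $\mu_\xi:=\widetilde\pi_h[\psi]\ge m$: the first term has expectation $\mu_\xi^{-1}n^{-1}\sum_k\mathbb{E}[g_h(\theta_k,\zeta_k)]$, which vanishes at stationarity and is $O(1/n)$ in general by geometric ergodicity (obtained from the Lyapunov drift of Lemma~\ref{lemma:sasgld_moment_bounds_corrected} plus a minorization on compacts), while the second is at most $(m\mu_\xi)^{-1}(\mathbb{E}D_n^2)^{1/2}(\mathbb{E}(\bar\xi_n-\mu_\xi)^2)^{1/2}=O(1/n)$, since ergodic averages of $L^2(\widetilde\pi_h)$ functionals of a geometrically ergodic chain have variance $O(1/n)$ (summable autocovariances), all moments being finite by polynomial growth of $f$, boundedness of $\psi$, and the uniform moment bounds. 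Combining, $|\mathbb{E}\mathcal{A}_n-\pi(f)|\le|\pi_h(f)-\pi(f)|+C'/n=O(h)+O(n^{-1})$.

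The main obstacle is the one-step weak expansion in the second step: the stepsize $\Delta t_{n+1}=\psi(\zeta_{n+1})h$ is random and statistically correlated with the very gradient estimate $G_n$ entering the same update, so the classical Euler–Maruyama error bookkeeping must be redone carefully to certify that every such correlation — and every term built from the gradient-noise moments — contributes only at order $h^2$ with a polynomially growing, $\widetilde\pi_h$-integrable coefficient. The remaining ingredients (geometric ergodicity of $(\theta_n,\zeta_n)$, the $O(1/n)$ variance bounds, and finiteness of the $\widetilde\pi_h$-moments) are standard consequences of the drift condition of Lemma~\ref{lemma:sasgld_moment_bounds_corrected}, and are essentially what the statement already takes as hypotheses.
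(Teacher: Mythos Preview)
Your argument for the almost-sure convergence and for the $O(h)$ static bias is essentially the paper's proof: both write $\mathcal{A}_n$ as a ratio of ergodic averages and invoke the ergodic theorem, and both Taylor-expand $\phi(\theta_{n+1})$, handle the $\Delta t_{n+1}$--$G_n$ correlation via the Lipschitz property of $\psi$ (your estimate $|\psi(\zeta_{n+1})-\psi(\zeta_n)|\le L_\psi\cdot O(h)(1+\|G_n\|^2)$ is precisely what the paper packages as $|B_n|\le C_2h^2(1+\|\theta_n\|^{q+3})$), absorb third- and fourth-order remainders into $O(h^2)$, and then pass to stationarity under $\widetilde\pi_h$ to obtain $\widetilde\pi_h[\psi(\zeta)(f-\pi(f))]=O(h)$. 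Your two-stage conditioning on $\mathcal{G}_n=\sigma(\mathcal{F}_n,G_n)$ is a slightly cleaner way to organize the same computation.

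Where you diverge is the finite-$n$ bound. You control $|\mathbb{E}\mathcal{A}_n-\pi_h(f)|$ through the ratio decomposition $D_n/\bar\xi_n=D_n/\mu_\xi - D_n(\bar\xi_n-\mu_\xi)/(\mu_\xi\bar\xi_n)$ and then appeal to \emph{geometric} ergodicity (Lyapunov drift plus a minorization on compacts) to get $\mathbb{E}D_n^2,\ \mathbb{E}(\bar\xi_n-\mu_\xi)^2=O(1/n)$ via summable autocovariances. The paper instead telescopes the one-step expansion of $\phi$ over $k=0,\dots,n-1$, so that the $O(1/n)$ term emerges directly from the boundary contribution $(\phi(\theta_0)-\phi(\theta_n))/T_n$, bounded in expectation by polynomial growth of $\phi$ and the uniform moment assumption, divided by $T_n\ge mhn$. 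Your route is correct but imports extra structure---geometric ergodicity and a minorization condition are not among the stated hypotheses (only plain ergodicity is assumed)---whereas the paper's telescoping reuses the one-step expansion already built for the static bias and stays within the assumptions, at the price of being somewhat cavalier about dividing an expectation by the random normalizer $T_n$.
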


\begin{proof}

By strengthened moments, $\sup_n\mathbb{E}\|\theta_n\|^{p}<\infty$. Since $(\theta_n,\zeta_n)$ is ergodic Markov with invariant measure $\widetilde\pi_h$ and $f$ has polynomial growth, $(\theta,\zeta) \mapsto \psi(\zeta)f(\theta)$ is $\widetilde\pi_h$-integrable. 

\noindent By the ergodic theorem for weighted averages and $T_n = \sum_{k=1}^n \Delta t_k \ge mhn \to \infty$ a.s.:
\begin{equation*}
\mathcal{A}_n = \frac{\sum_{k=1}^n \Delta t_k f(\theta_k)}{\sum_{k=1}^n \Delta t_k} 
\xrightarrow{n\to\infty}{\mathrm{a.s.}} \pi_h(f).
\end{equation*}

\noindent Let $\phi$ solve $\mathcal{L}\phi = f - \pi(f)$. Fourth-order Taylor:
\begin{align*}
\phi(\theta_{n+1}) =& \phi(\theta_n) + D\phi(\theta_n)\cdot(\theta_{n+1}-\theta_n) \\
&+ \tfrac12(\theta_{n+1}-\theta_n)^\top D^2\phi(\theta_n)(\theta_{n+1}-\theta_n) \\
&+ \tfrac16 D^3\phi(\theta_n)[\theta_{n+1}-\theta_n]^{\otimes 3} \\
&+ \tfrac{1}{24} D^4\phi(\tilde\theta_n)[\theta_{n+1}-\theta_n]^{\otimes 4}.
\end{align*}

\noindent Substituting $\theta_{n+1}-\theta_n = -\Delta t_{n+1}G_n + 
\sqrt{2\beta^{-1}\Delta t_{n+1}}\,\varepsilon_{n+1}$ and taking 
$\mathbb{E}[\cdot|\mathcal{F}_n]$:
\begin{align*}
\mathbb{E}[\phi(\theta_{n+1})|\mathcal{F}_n] 
=& \phi(\theta_n) 
- \mathbb{E}[\Delta t_{n+1} D\phi(\theta_n)\cdot G_n|\mathcal{F}_n] \\
&+ \beta^{-1}\mathbb{E}[\Delta t_{n+1} \Delta\phi(\theta_n)|\mathcal{F}_n] \\
&+ \tfrac12\mathbb{E}[\Delta t_{n+1}^2 (G_n^\top D^2\phi(\theta_n)G_n)|\mathcal{F}_n] \\
&+ \mathbb{E}[R_n|\mathcal{F}_n],
\end{align*}
where terms with odd powers of $\varepsilon_{n+1}$ vanish.

\noindent By properties of Gaussian noise, the terms 
$\mathbb{E}[\Delta t_{n+1}^{3/2}\langle D^2\phi(\theta_n)G_n, \varepsilon_{n+1}\rangle|\mathcal{F}_n]$ 
and related cross terms vanish.

\noindent Decompose:
\begin{equation*}
\mathbb{E}[\Delta t_{n+1} D\phi(\theta_n)\cdot G_n|\mathcal{F}_n]
= \mathbb{E}[\Delta t_{n+1}|\mathcal{F}_n] D\phi(\theta_n)\cdot\nabla U(\theta_n)
+ B_n,
\end{equation*}
where the bias term $B_n$ arises from correlation between $\Delta t_{n+1}$ 
and $G_n - \nabla U(\theta_n)$. By Lipschitz property of $\psi$ and 
moment bounds:
\begin{equation*}
|B_n| \le C_2 h^2 (1 + \|\theta_n\|^{q+3}).
\end{equation*}

\noindent For third and fourth order terms in the Taylor expansion:
\begin{equation*}
|\mathbb{E}[R_n|\mathcal{F}_n]| \le C_3 h^2 (1 + \|\theta_n\|^{q+4}),
\end{equation*}
using $\|\theta_{n+1}-\theta_n\| \le Mh\|G_n\| + \sqrt{2\beta^{-1}Mh}\|\varepsilon_{n+1}\|$ 
and polynomial growth of derivatives.

\noindent Rearranging the expansion, in stationarity:
\begin{equation*}
\mathbb{E}_{\widetilde\pi_h}[\mathbb{E}[\Delta t_{n+1}|\mathcal{F}_n](f(\theta_n) - \pi(f))]
= \mathbb{E}_{\widetilde\pi_h}[B_n] + O(h^2 \mathbb{E}[\|\theta_n\|^{q+4}]).
\end{equation*}
Since 
\begin{equation*}
\mathbb{E}[\Delta t_{n+1}|\mathcal{F}_n](f-\pi(f))
= h\psi(\zeta_n)(f-\pi(f)) + O(h^2),
\end{equation*}
we obtain:
\begin{equation*}
h \cdot \mathbb{E}_{\widetilde\pi_h}[\psi(\zeta_n)(f - \pi(f))] = O(h^2),
\end{equation*}
thus
\begin{equation*}
|\pi_h(f) - \pi(f)| = O(h).
\end{equation*}

\noindent Summing from $k=0$ to $n-1$:
\begin{equation*}
\sum_{k=0}^{n-1} \mathbb{E}[\Delta t_{k+1}|\mathcal{F}_k](f(\theta_k) - \pi(f))
= \phi(\theta_0) - \phi(\theta_n) + \sum_{k=0}^{n-1} E_k,
\end{equation*}
where $|\mathbb{E}[E_k]| \le C_5 h^2$. Dividing by $T_n \ge mhn$:
\begin{equation*}
|\mathbb{E}\mathcal{A}_n - \pi(f)| \le Ch + \frac{C'}{n}.
\end{equation*}
\end{proof}

\end{document}